
\documentclass[runningheads]{llncs}

\usepackage{graphicx}
\usepackage{comment}
\usepackage{amsmath,amssymb} 
\usepackage{color}
\usepackage{cite}
\usepackage{multirow}
\usepackage{hyperref}
\usepackage[table,xcdraw]{xcolor}
\usepackage{subcaption}
\usepackage[misc,geometry]{ifsym}

\graphicspath{{.}{images/}}

 \usepackage[mathscr]{eucal}

\usepackage{tabularx}
\usepackage{multirow}
\usepackage{enumerate}


\usepackage{colortbl}
\usepackage{dcolumn}
\newcolumntype{.}{D{.}{.}{1.3}}



\usepackage{txfonts}
\usepackage[subnum]{cases}

\newcommand\dashrule{\leavevmode\xleaders\hbox{-}\hfill\kern0pt}


\def\diag{\operatorname{diag}}

\newcommand{\ba}{{\bvec{a}}}
\newcommand{\bb}{\bvec{b}}
\newcommand{\bc}{\bvec{c}}

\newcommand{\bw}{\bvec{w}}


\newcommand{\bA}{{\bf A}}
\newcommand{\bB}{{\bf B}}
\newcommand{\bC}{{\bf C}}

\newcommand{\bI}{{\bf I}}

\newcommand{\bK}{{\bf K}}

\newcommand{\bQ}{{\bf Q}}

\newcommand{\bU}{{\bf U}}
\newcommand{\bV}{{\bf V}}
\newcommand{\bW}{{\bf W}}

\newcommand{\bZ}{{\bf Z}}









\newcommand{\be}{\begin{eqnarray}}
\newcommand{\ee}{\end{eqnarray}}

\newcommand{\matrixb}{\left[ \begin{array}}
\newcommand{\matrixe}{\end{array} \right]}





\newcommand{\tr}{\mathop{\rm tr}\nolimits}

\def\*{\circledast}

\newcommand{\bvec}[1]{\boldsymbol{#1}}

\def\vectorize{\operatorname{vec}}
\newcommand{\vtr}[1]{\vectorize\hspace{-.3ex}\left(#1\right)}

\newcommand{\tensor}[1]{\boldsymbol{\mathscr{\MakeUppercase{#1}}}} 

\newcommand{\tE}{\tensor{E}}

\newcommand{\tG}{\tensor{G}}

\newcommand{\tK}{\tensor{K}}

\newcommand{\tT}{\tensor{T}}

\newcommand{\tX}{\tensor{X}}
\newcommand{\tY}{\tensor{Y}}

\newcommand{\minitab}[2][l]{\begin{tabular}{@{}#1}#2\end{tabular}}
\usepackage[vlined,ruled,commentsnumbered,algo2e]{algorithm2e}


\usepackage{aurical}






\usepackage{arydshln}




\usepackage{algorithm}
\usepackage{algorithmic}

\graphicspath{{.}{images/}}

\begin{document}
\pagestyle{headings}
\mainmatter
\def\ECCVSubNumber{6795}  

\title{Stable Low-rank Tensor Decomposition for Compression of Convolutional Neural Network} \footnotetext{Accepted to ECCV2020}

\titlerunning{Stable Tensor Decomposition for Compression of CNN}
%
\author{Anh-Huy Phan$^{\textrm{\Letter}}$\inst{1} \and
Konstantin Sobolev\inst{1} \and
Konstantin Sozykin\inst{1} \and
Dmitry Ermilov\inst{1} \and
Julia Gusak\inst{1} \and
Petr Tichavsk{\'y}\inst{2} \and
Valeriy Glukhov\inst{3} \and 
Ivan Oseledets\inst{1} \and
Andrzej Cichocki\inst{1}}
\authorrunning{A.-H. Phan et al.}
%
\institute{Skolkovo Institute of Science and Technology (Skoltech), Moscow, Russia \\
\email{\{a.phan, konstantin.sobolev, konstantin.sozykin, dmitrii.ermilov, y.gusak, i.oseledets, a.cichocki\}@skoltech.ru}\\
\and
The Czech Academy of Sciences, Institute of Information Theory and Automation, Prague\\
\email{tichavsk@utia.cas.cz}
\and
Noah's Ark Lab, Huawei Technologies\\
\email{glukhov.valery@huawei.com}
}

\maketitle
\begin{abstract}

Most state-of-the-art deep neural networks are overparameterized and exhibit a high computational cost. A straightforward approach to this problem is to replace convolutional kernels with its low-rank tensor approximations, whereas the Canonical Polyadic tensor Decomposition is one of the most suited models. However, fitting the convolutional tensors by numerical optimization algorithms often encounters diverging components, i.e., extremely large rank-one tensors but canceling each other. Such degeneracy often causes the non-interpretable result and numerical instability for the neural network ﬁne-tuning. This paper is the first study on degeneracy in the tensor decomposition of convolutional kernels. We present a novel method, which can stabilize the low-rank approximation of convolutional kernels and ensure efficient compression while preserving the high-quality performance of the neural networks. We evaluate our approach on popular CNN architectures for image classification
and show that our method results in much lower accuracy degradation and provides consistent performance.

\keywords{convolutional neural network acceleration, low-rank tensor decomposition, sensitivity, degeneracy correction}
\end{abstract}

\section{Introduction}
Convolutional neural networks (CNNs) and their recent extensions have significantly increased their ability to solve complex computer vision tasks, such as image classification, object detection, instance segmentation, image generation, etc. Together with big data and fast development of the internet of things, CNNs bring new tools for solving computer science problems, which are intractable using classical approaches. 

Despite the great successes and rapid development of CNNs, most modern neural network architectures contain a huge number of parameters in the convolutional and fully connected layers, therefore, demand extremely high computational costs \cite{rigamonti2013}, which makes them difficult to deploy on devices with limited computing resources, like PC or mobile devices. Common approaches to reduce redundancy of the neural network parameters are: structural pruning \cite{He_2018_ECCV, zhuang2018discrimination, gao2018dynamic, he2018soft}, sparsification \cite{HanPrunning15, figurnov2016perforatedcnns, molchanov2017variational}, quantization \cite{rastegari2016xnor,bulat2019matrix}
and low-rank approximation \cite{Denton2014,lebedev2014speeding, Kim2016,chen2018adaptive,gusak2019automated,kossaifi2019efficient}.

The weights of convolutional and fully connected layers are usually overparameterized and known to lie on a low-rank subspace \cite{Denil2013}. Hence, it is possible to represent them in low-rank tensor/tensor network formats using e.g., Canonical Polyadic decomposition (CPD) \cite{Denton2014,lebedev2014speeding,AstridL17}, Tucker decomposition \cite{Kim2016,gusak2019automated}, Tensor Train decomposition \cite{Novikov2015,Wang20203dtt}. The decomposed layers are represented by a sequence of new layers with much smaller kernel sizes, therefore, reducing the number of parameters and computational cost in the original model.

Various low-rank tensor/matrix decompositions can be straightforwardly applied to compress the kernels. This article intends to promote the simplest tensor decomposition model, the Canonical Polyadic decomposition (CPD).

\subsection{Why CPD}\label{subs::whycpd}

In neural network models working with images, the convolutional kernels are usually tensors of order 4 with severely unbalanced dimensions, e.g., $D \times D \times S \times T$, where 
$D\times D$ represents the filter sizes, $S$ and $T$ denote the number of input and output channels, respectively. The typical convolutional filters are often of relatively small sizes, e.g., $3\times 3$, $7 \times 7$, compared to the input 
($S$) and output ($T$) 
dimensions, which in total may have hundred of thousands of filters.  This leads to excessive redundancy among the kernel filters,  which are particularly suited for tensor decomposition methods. 
Among low-rank tensor decomposition and tensor networks, the Canonical Polyadic tensor decomposition \cite{harshman1970foundations, hillar2013most} is the simplest and elegant model, which represents a tensor by sum of rank-1 tensors\footnote{Rank-1 tensor of size $n_1\times n_2\times\dots \times n_{d}$ is an outer product of $d$ vectors with dimensions~$n_1, n_1,\dots, n_d$.} or equivalently by factor matrices interconnected through a diagonal tensor (Fig.~\ref{fig:tucker2cp}). 
The number of parameters for a CP model of rank-$R$ is $R(2D+S+T)$ or $R(D^2+S+T)$ when we consider  kernels as order-4 tensors or their reshaped order-3 versions, respectively.
Usually, CPD gains a relatively high compression ratio since the decomposition rank is not very large \cite{lebedev2014speeding, gusak2019automated}. 

Representation of the high order convolutional kernels in the form of the CP model is equivalent to the use of separable convolutions. 
In \cite{kossaifi2019efficient}, the authors modeled the high order kernels in the generalized multiway convolution by the CP model. 

The Tucker tensor decomposition (TKD) \cite{tucker1963implications} is an alternative tensor decomposition method for convolutional kernel compression \cite{Kim2016}. The TKD provides more flexible interaction between the factor matrices through a core tensor, which is often dense in practice (Fig.~\ref{fig:tucker2}). 
Kim et al. \cite{Kim2016} investigated low-rank models at the most suited noise level for different unfoldings\footnote{The mode-$j$ unfolding of an order-$d$ tensor of size $n_1\times n_2 \times \dots \times n_d$ reorders the elements of the tensor into a matrix with $n_j$ rows and $n_1\dots n_{j - 1}n_{j + 1}\dots n_d$ columns.} of the kernel tensor. This heuristic method does not consider a common noise level for multi modes and is not optimal to attain the approximation error bound.

Block tensor decomposition \cite{Lath-BCM12} is an extension of the TKD, which models data as the sum of several Tucker or Kruskal terms, i.e., a TKD with block-diagonal core tensor. For the same multilinear rank as in TKD, BTD exhibits a smaller number of parameters; however, there are no available proper criteria for block size selection (rank of BTD). 

In addition, the other tensor networks, e.g., Tensor Train\cite{OseledetsTT09} or Tensor Chain (TC) \cite{Espig_2011,Khoromskij-SC}, are not applicable unless the kernel filters are tensorized to higher orders.
Besides, the Tensor Chain contains a loop, is not closed and leads to severe numerical instability to ﬁnd the best approximation, see Theorem 14.1.2.2\cite{Landsberg} and \cite{Handschuhth}.

We later show that CPD can achieve much better performance with an even higher compression ratio by further compression the Tucker core tensors by solving a suitably formulated optimization problem.

\begin{figure}[t]
\begin{subfigure}[t]{.28\linewidth}
\includegraphics[width=\linewidth]{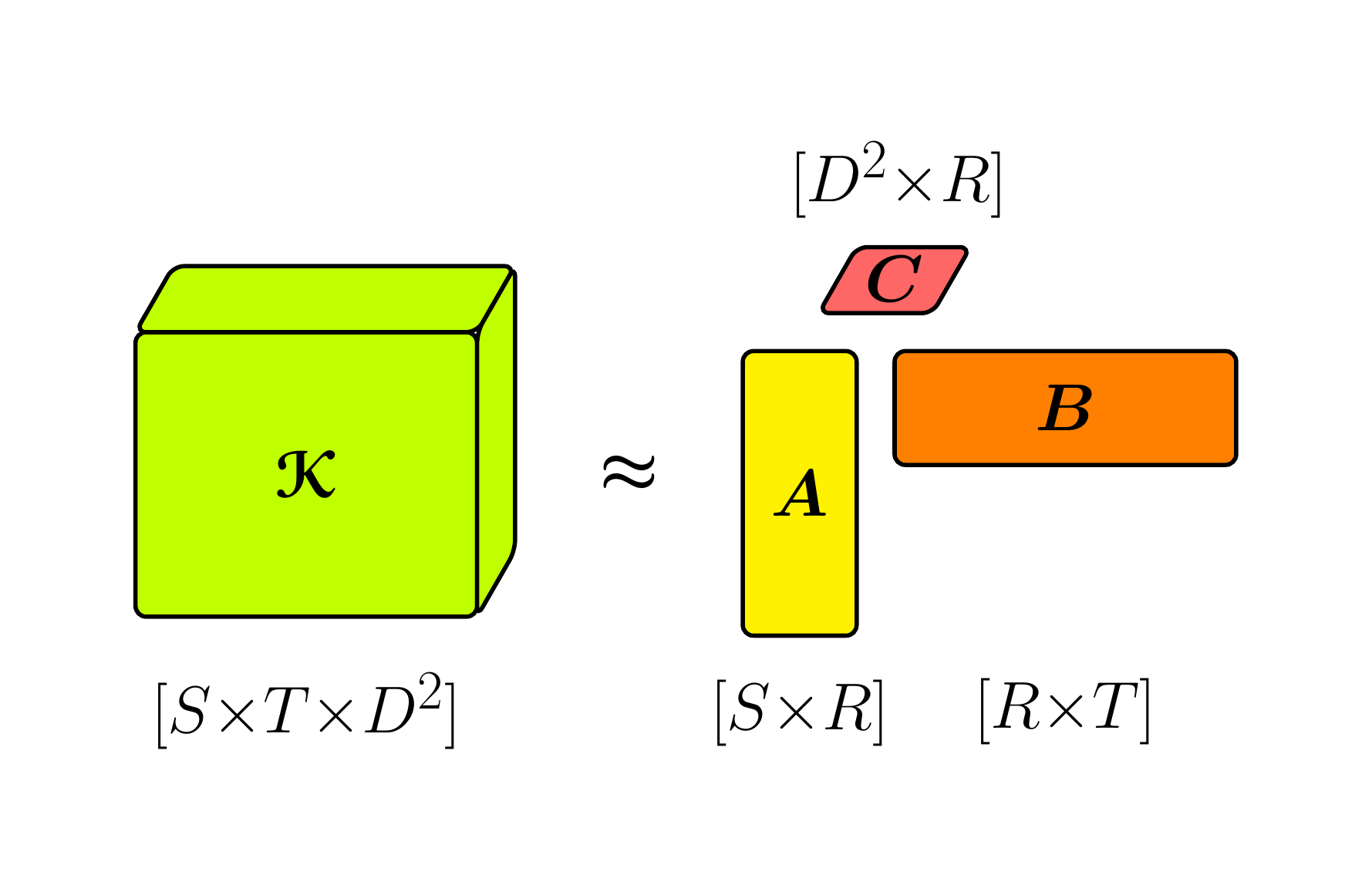}
\caption{CPD}\label{fig:tucker2cp}
\end{subfigure}
\begin{subfigure}[t]{.33\linewidth}
\includegraphics[width=\linewidth]{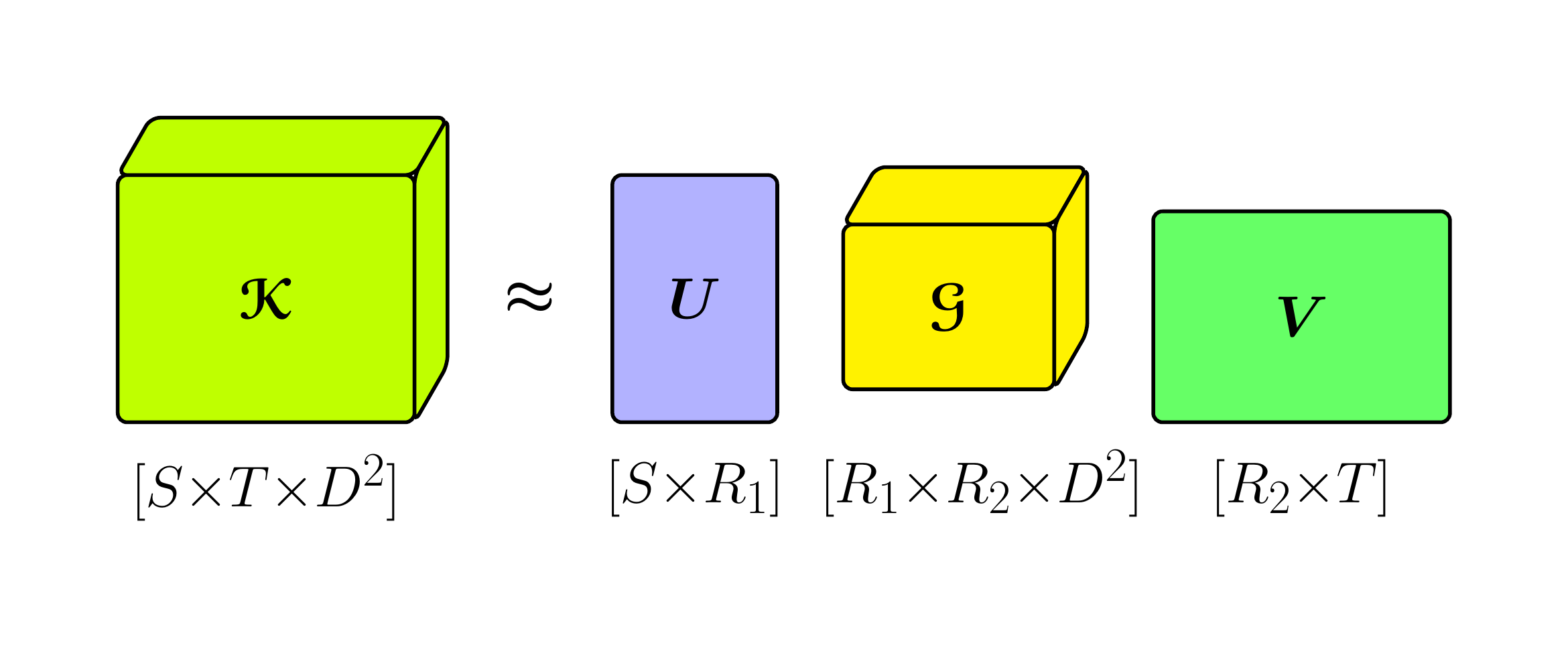}
\caption{TKD}\label{fig:tucker2}
\end{subfigure}
\begin{subfigure}[t]{.35\linewidth}
\includegraphics[width=\linewidth]{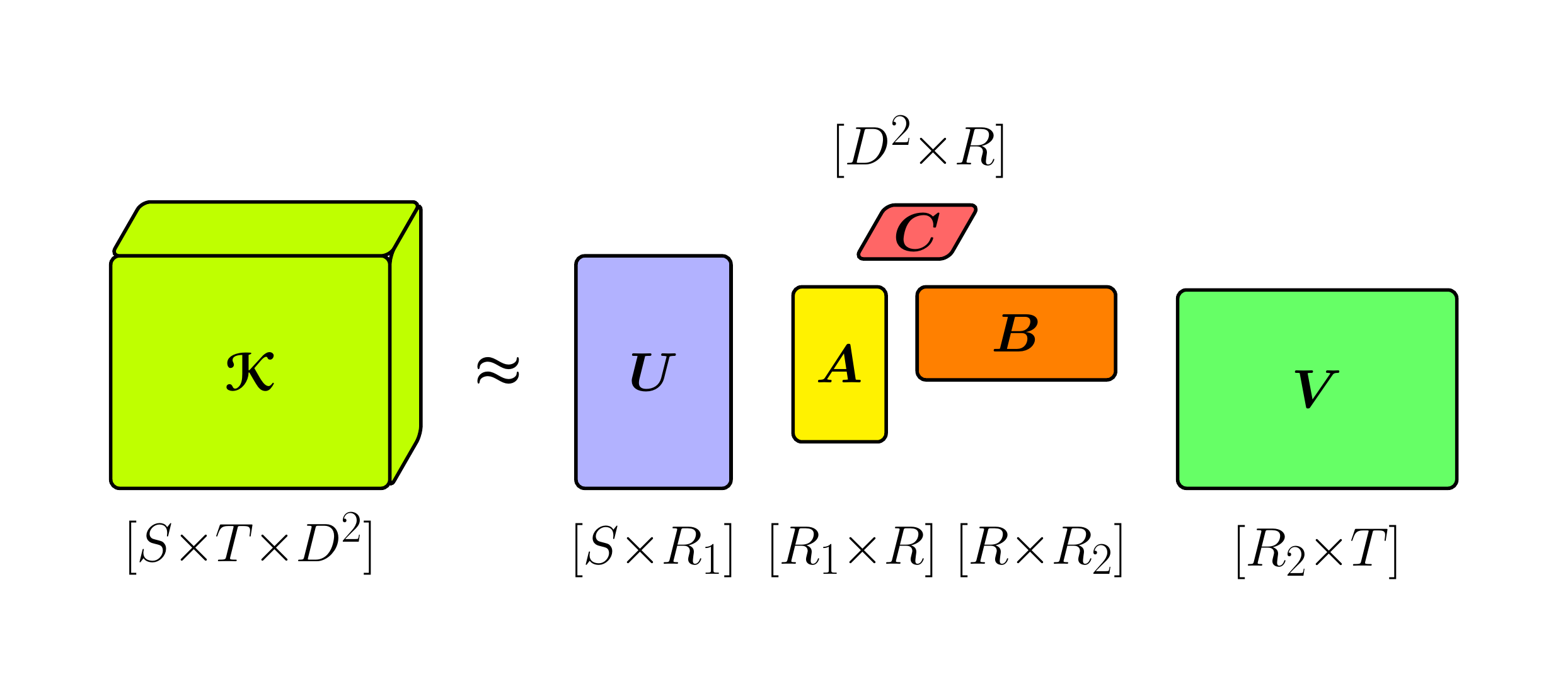}
\caption{TKD-CPD}\label{fig:tucker2epc}
\end{subfigure}
\caption{Approximation of a third-order tensor using Canonical Polyadic tensor decomposition (CPD), Tucker-2 tensor decomposition (TKD), and their combination (TKD-CPD). CPD and TKD are common methods applied for CNN compression.}\label{fig_models}
\end{figure}

\subsection{Why not Standard CPD}\label{subs::whynotcpd}

In one of the first works applying CPD to convolutional kernels, Denton et al. \cite{Denton2014} computed the CPD by sequentially extracting the best rank-1 approximation in a greedy way. 
This type of deflation procedure is not a proper way to compute CPD unless decomposition of orthogonally decomposable tensors \cite{doi:10.1137/S0895479899352045}
or with a strong assumption, e.g., at least two factor matrices are linearly independent, and the tensor rank must not exceed any dimension of the tensor \cite{Phan_tensordeflation_alg}. The reason is that subtracting the best rank-1 tensor does not guarantee to decrease the rank of the tensor \cite{Stegeman:2010:SBR}.

In \cite{lebedev2014speeding}, the authors approximated the convolution kernel tensors using the Nonlinear Least Squares (NLS) algorithm \cite{Vervliet2016tensorlab}, one of the best existing algorithms for CPD. 
However, as mentioned in the Ph.D. thesis~\cite{lebedev2019phd},  it is not trivial to optimize a neural network even when weights from a single layer are factorized, and the authors “failed. " to find a good SGD learning rate” with fine-tuning a  classification model on the ILSVRC-12 dataset.

{\bf Diverging Component - Degeneracy.} 
Common phenomena when using numerical optimization algorithms  to  approximate a tensor of relatively high rank by a low-rank model or a tensor, which has nonunique CPD, is that there should exist at least two rank-one tensors such that their Frobenius norms or intensities are relatively high but cancel each other
\cite{deSilva-Lim08}, $\|\mathbf {a}_{r}^{(1)}\circ \mathbf {a} _{r}^{(2)}\circ \cdots \circ \mathbf {a} _{r}^{(d)}\|_{F}\to \infty\,.$


The degeneracy of CPD is reported in the literature, e.g., in\cite{MITCHELL94,Paattero00,Harshman04,krijnen,cichocki2016tensor,Rayens}.
Some efforts which impose additional constraints on the factor matrices can improve stability and accelerate convergence, such as, column-wise orthogonality \cite{Rayens,krijnen}, positivity or nonnegativity \cite{limcomNN}. 
However, the constraints are not always applicable in some data, and thus prevent the estimator from getting lower approximation error, yielding to the trade-off between estimation stability and good approximation error.\footnotetext{As shown in  \cite{DBLP:conf/cvpr/VasilescuT03}, RMS error is not the only one minimization criterion for a particular computer vision task.}

We have applied CPD approximations for various CNNs and confirm that the diverging component occurs for most cases when we used either Alternating Least Squares (ALS) or NLS \cite{Vervliet2016tensorlab} algorithm. 
As an example, we approximated one of the last convolutional layers from ResNet-18 with rank-500 CPD and plotted in Fig.~\ref{fig:norm_of_lambdas}(left) intensities of CPD components, i.e., Frobenius norm of rank-1 tensors.
The ratio between the largest and smallest intensities of rank-1 tensors was greater than 30. Fig.~\ref{fig:norm_of_lambdas}(right) shows that the sum of squares of intensities for CPD components is (exponentially) higher when the decomposition is with a higher number of components. Another criterion, sensitivity (Definition~\ref{def:sensitivity}), shows that the standard CPD algorithms are not robust to small perturbations of factor matrices, and sensitivity increases with higher CP rank.

Such degeneracy causes the instability issue when training a CNN with decomposed layers in the CP (or Kruskal) format.
More specifically, it causes difficulty for a neural network to perform fine-tuning, selecting a good set of parameters, and maintaining stability in the entire network.
This problem has not been investigated thoroughly. To the best of our knowledge, there is no method for handling this problem.

\begin{figure}[t]
\includegraphics[width=0.48\linewidth]{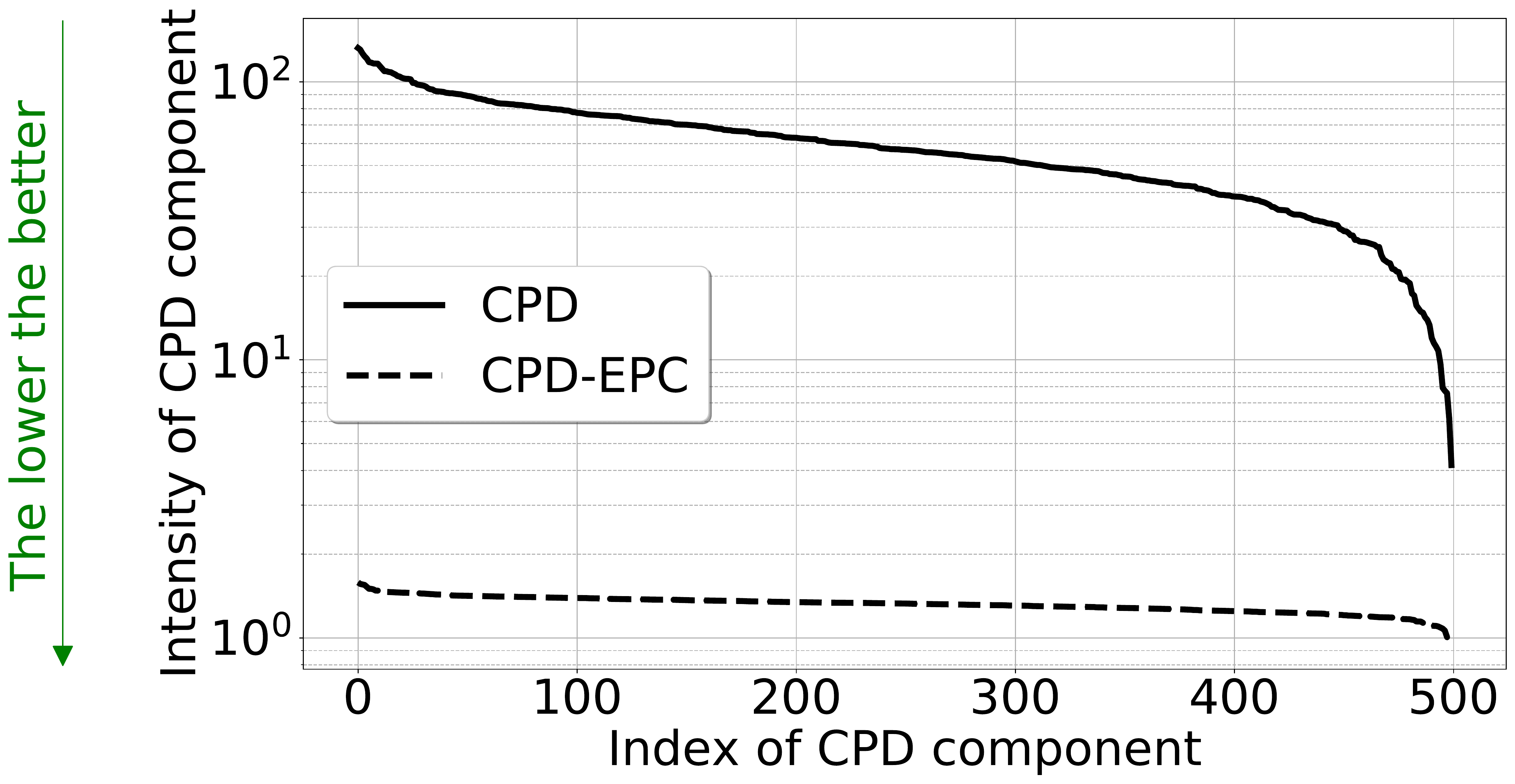}
\hfill
\includegraphics[width=0.48\linewidth]{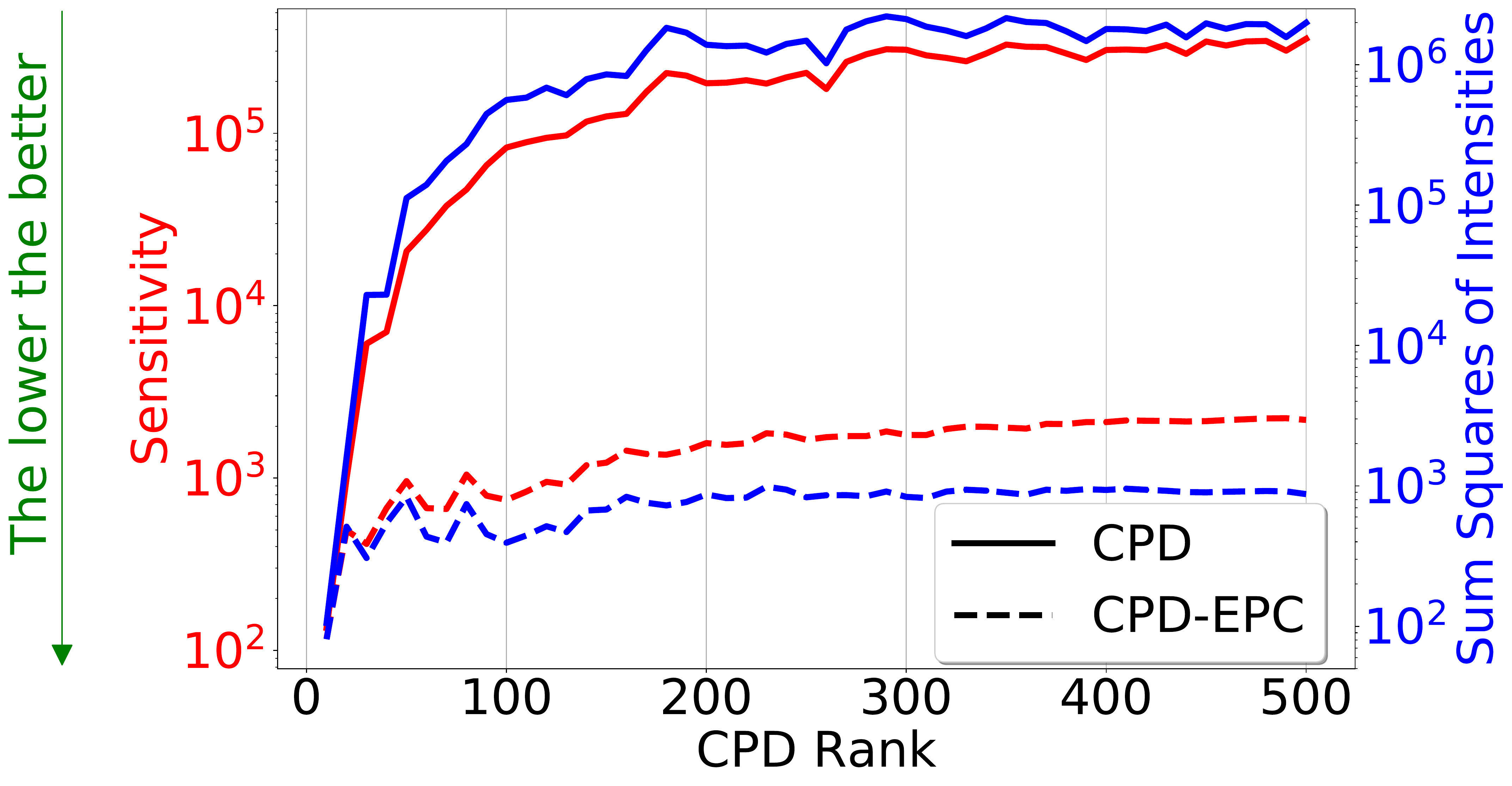}
\centering
\caption{(Left) Intensity (Frobenius norm) of rank-1 tensors in CPDs of the kernel in the $4^{th}$ layer of ResNet-18. (Right) Sum of  squares of the intensity and Sensitivity vs Rank of CPD. EPC-CPD demonstrates much lower intensity and sensitivity as compared to CPD.}
\label{fig:norm_of_lambdas}
\end{figure}

\subsection{Contributions}

In this paper, we address the problem of CNN stability compressed by CPD. 
The key advantages and major contributions of our paper are the following:
\begin{itemize}
    \item We propose a new stable and efficient method to perform neural network compression based on low-rank tensor decompositions.
    \item 
    We demonstrate how to deal with  the degeneracy, the most severe problem when approximating convolutional kernels with CPD.  Our approach allows finding CPD a reliable representation with minimal sensitivity and intensity.
    
    \item 
    We show that the combination of Tucker-2 (TKD) and the proposed stable CPD (Fig.~\ref{fig:tucker2epc}) outperforms CPD in terms of accuracy/compression trade-off.
    
    
\end{itemize}
We provide results of extensive experiments to confirm the efficiency of the proposed algorithms. Particularly, we empirically show that the neural network with weights in factorized CP format obtained using our algorithms is more stable during fine-tuning and recovers faster (close) to initial accuracy.

\section{Stable Tensor Decomposition Method}\label{sec::method_idea}

\subsection{CP Decomposition of Convolutional Kernel}

In CNNs, the convolutional layer performs mapping of an input (source) tensor $\tX$ of size $H \times W \times S$ into output (target) tensor $\tY$ of size $H' \times W' \times T$ following the relation 
\begin{equation} \label{eq1}
\displaystyle \tY_{h',w',t} = \sum_{i=1}^D \sum_{j=1}^D \sum_{s=1}^S {\tilde{\tK}}_{i,j,s,t} \mathcal{X}_{h_i,w_j,s},
\end{equation}
where $h_i = (h' - 1)$ $\Delta + i - P \text{, and } w_j = (w' - 1)$ $\Delta + j - P$, and $\tilde{\tK}$ is an order-4 kernel tensor of size $D \times D \times S \times T$, $\Delta$ is stride, and $P$ is zero-padding size.
 
Our aim is to decompose the kernel tensor $\tilde{\tK}$ by the CPD or the TKD. 
As it was mentioned earlier, we treat the kernel $\tilde{\tK}$ as order-3 tensor $\tK$ of the size $D^2 \times S \times T$, and represent the kernel $\tK$
by sum of $R$ rank-1 tensors \\[-1em]
\be \label{eq3}
\begin{split}
\tK \simeq \hat{\tK} = \sum_{r=1}^R {\ba}_r \circ {\bb}_r \circ  {\bc}_r, 
\end{split}
\ee
where $\bA = [\ba_1, \ldots, \ba_R]$, $\bB = [\bb_1, \ldots, \bb_R]$ and $\bC = [\bc_1, \ldots, \bc_R]$ are factor matrices of size $D^2 \times R$, $S\times R$ and $T\times R$, respectively. See an illustration of the model in Fig.~\ref{fig:tucker2cp}. The tensor $ \hat{\tK} = \llbracket \bA, \bB, \bC \rrbracket$ in the Kruskal format uses $(D^2 + S + T) \times R$ parameters.

\subsection{Degeneracy and its effect to CNN stability}

Degeneracy occurs in most CPD of the convolutional kernels. The Error Preserving Correction (EPC) method \cite{Phan2019} suggests a correction to the decomposition results in order to get a
more stable decomposition with lower sensitivity.
There are two possible measures for assessment of the degeneracy degree of the CPD: sum of Frobenius norms of the rank-1 tensors\cite{Phan2019}\\[-2em]
\be
{\tt{sn}}(\llbracket \bA,\bB,\bC \rrbracket) = \sum_{r = 1}^{R} \|\ba_r \circ \bb_r \circ \bc_r\|_F^2 
\ee
and sensitivity, defined as follows.

\begin{definition}[Sensitivity \cite{Petr_sensitivity}]\label{def:sensitivity}
Given a tensor $\tT = \llbracket \bA, \bB, \bC \rrbracket$, define the sensitivity as
\be
 {\tt{ss}}(\llbracket \bA,\bB,\bC \rrbracket) = \lim_{\sigma^2 \rightarrow 0} \frac{1}{R\sigma^2} E\{\|\tT - \llbracket \bA + \delta \bA, \bB + \delta \bB, \bC + \delta \bC \rrbracket \|_F^2\}
 \label{eq_ss}
\ee 
where $\delta \bA$, $\delta \bB$, $\delta \bC$ have random i.i.d. elements from $N(0,\sigma^2).$
\end{definition}
The sensitivity of the decomposition can be measured by the expectation ($E\{\cdot\}$)  of the normalized squared Frobenius norm of the difference. In other words, sensitivity of the tensor $\tT = \llbracket \bA, \bB, \bC \rrbracket$ is a measure with respect to perturbations in individual factor matrices.
CPDs with high sensitivity are usually useless.

\begin{lemma}
\be
{\tt{ss}}(\llbracket \bA,\bB,\bC \rrbracket) = K\tr\{(\bA^T \bA) \* (\bB^T \bB)\} + I \tr\{(\bB^T \bB) \* (\bC^T \bC) \} + J \tr\{ (\bA^T \bA) \* (\bC^T \bC) \}. \label{eq_ss_full}
\ee

where $\*$ denotes the Hadamard element-wise product.
\end{lemma}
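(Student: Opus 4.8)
The plan is to read ${\tt{ss}}$ as the leading coefficient of $\sigma^2$ in the expected squared perturbation error, and to extract it from a first-order multilinear expansion of the Kruskal tensor. Let the tensor be of size $I\times J\times K$, so that $\ba_r\in\Real^I$, $\bb_r\in\Real^J$, $\bc_r\in\Real^K$. First I would expand
\[
\llbracket \bA+\delta\bA,\bB+\delta\bB,\bC+\delta\bC\rrbracket=\sum_{r=1}^{R}(\ba_r+\delta\ba_r)\circ(\bb_r+\delta\bb_r)\circ(\bc_r+\delta\bc_r)
\]
and subtract $\tT=\sum_r\ba_r\circ\bb_r\circ\bc_r$. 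The zeroth-order terms cancel, so the difference splits, according to the number of perturbation factors it contains, as $-(\tE_1+\tE_2+\tE_3)$, where
\[
\tE_1=\sum_{r=1}^{R}\bigl(\delta\ba_r\circ\bb_r\circ\bc_r+\ba_r\circ\delta\bb_r\circ\bc_r+\ba_r\circ\bb_r\circ\delta\bc_r\bigr)
\]
is linear in the perturbations, and $\tE_2,\tE_3$ collect the quadratic and cubic terms.

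Next I would argue that only $\tE_1$ survives the limit. Expanding $E\{\|\tE_1+\tE_2+\tE_3\|_F^2\}$, each monomial in the i.i.d.\ Gaussian entries of $\delta\bA,\delta\bB,\delta\bC$ of odd total degree has zero mean, while a monomial of even degree $2k$ is of order $\sigma^{2k}$. Hence the only $O(\sigma^2)$ contribution comes from $E\{\|\tE_1\|_F^2\}$, whereas the cross terms with $\tE_2,\tE_3$ and the norms $\|\tE_2\|_F^2,\|\tE_3\|_F^2$ are $O(\sigma^4)$ and drop out after division by $\sigma^2$ and the limit $\sigma^2\to0$.

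The heart of the computation is $E\{\|\tE_1\|_F^2\}$, for which I would use the factorization $\langle\bu_1\circ\bv_1\circ\bw_1,\bu_2\circ\bv_2\circ\bw_2\rangle=(\bu_1^T\bu_2)(\bv_1^T\bv_2)(\bw_1^T\bw_2)$. A cross product between two \emph{different} perturbation directions --- say a $\delta\bA$ term against a $\delta\bB$ term --- factorizes into $E\{\langle\delta\ba_r,\ba_{r'}\rangle\}\,E\{\langle\bb_r,\delta\bb_{r'}\rangle\}=0$ by independence and zero mean, so only the three ``diagonal'' sums remain. In each of these, $E\{\langle\delta\ba_r,\delta\ba_{r'}\rangle\}=I\sigma^2\delta_{rr'}$ (and its analogues) collapses the double sum over $r,r'$ to a single sum and contributes the dimension of the \emph{perturbed} mode as a prefactor; the $\delta\bA$ direction thus yields $I\sigma^2\sum_r(\bb_r^T\bb_r)(\bc_r^T\bc_r)$, and similarly for $\delta\bB$ and $\delta\bC$. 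Finally I would apply the identity $\tr\{(\bX^T\bX)\*(\bY^T\bY)\}=\sum_r\|\bx_r\|^2\|\by_r\|^2$ (the $(r,r)$ entry of a Hadamard product is the product of the $(r,r)$ entries) to rewrite each sum as a trace, collect the three contributions, and pass to the limit, which gives the stated formula with the complementary-mode dimensions $I,J,K$ as coefficients.

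The main obstacle is the second-moment bookkeeping: establishing that the nine first-order cross products collapse to the three diagonal ones, that the off-diagonal ($r\neq r'$) parts vanish in expectation, and that every quadratic and cubic perturbation term is genuinely $O(\sigma^4)$ so that it disappears in the limit. Once these vanishing statements are in hand, the remaining algebra is routine.
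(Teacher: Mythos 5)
Your argument is correct and follows essentially the same route as the paper: expand the perturbed Kruskal tensor, discard the cross terms and the higher-order (quadratic and cubic) perturbation terms as $O(\sigma^4)$ or zero-mean, and reduce the surviving three diagonal second moments to Hadamard-product traces --- you do the last step via the rank-one inner-product identity where the paper uses vectorization and Khatri--Rao/Kronecker algebra, but that is only a bookkeeping difference. Your explicit justification of why the off-diagonal and higher-order terms vanish is in fact slightly more careful than the paper's one-line assertion, and your mode-dimension prefactors $I,J,K$ match the stated formula (note the paper's intermediate displays write $R\sigma^2$ where the dimension of the perturbed mode is meant, and its $\frac{1}{R\sigma^2}$ normalization in the definition is not fully consistent with the final expression --- your reading of ${\tt ss}$ as the coefficient of $\sigma^2$ is the one that matches the lemma).
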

\begin{proof}
First, the perturbed tensor in (\ref{eq_ss}) can be expressed as sum of 8 Kruskal terms 
\begin{align*}
&\llbracket \bA + \delta\bA, \bB + \delta\bB,  \bC + \delta\bC \rrbracket = \llbracket \bA, \bB,  \bC \rrbracket + \llbracket \delta\bA, \bB,  \bC \rrbracket
+ \llbracket \bA, \delta\bB, \bC \rrbracket
+ \llbracket \bA, \bB, \delta\bC \rrbracket \notag \\
& \qquad + \llbracket \delta\bA, \delta\bB, \bC \rrbracket 
+ \llbracket \delta\bA, \bB, \delta\bC \rrbracket 
+ \llbracket \bA, \delta\bB, \delta\bC \rrbracket 
+ \llbracket \delta\bA, \delta\bB, \delta\bC \rrbracket .
\end{align*}
Since these Kruskal terms are uncorrelated and expectation of the terms composed by two or three factor matrices $\delta\bA$, $\delta\bB$ and $\delta\bC$ are negligible, the expectation in  (\ref{eq_ss}) can be expressed in the form 
\begin{align}
E\{\|\tT - \llbracket \bA + \delta\bA, \bB + \delta\bB,  \bC + \delta\bC \rrbracket\|_F^2\} 
&= 
E\{\|\llbracket \delta\bA, \bB,  \bC \rrbracket\|_F^2\} + 
\notag \\
& 
+ E\{\|\llbracket \bA, \delta\bB, \bC \rrbracket\|_F^2\}
+ E\{\|\llbracket \bA, \bB, \delta\bC \rrbracket\|_F^2\} \label{eq_ss2}\, .
\end{align}
Next we expand the Frobenius norm of the three Kruskal tensors
\be
E\{\|\llbracket \delta\bA, \bB,  \bC \rrbracket\|_F^2\} &=& E\{\|\left((\bC \odot \bB) \otimes \bI\right) \vtr{\delta\bA}\|^2\} \notag \\
&=& E\{\tr(\left(\bC \odot \bB) \otimes \bI\right)^T \left((\bC \odot \bB) \otimes \bI\right) \vtr{\delta\bA}\vtr{\delta\bA}^T)\} \notag \\
&=& 
\sigma^2 \tr((\bC \odot \bB)^T (\bC \odot \bB) \otimes \bI) \notag \\
&=& 
R\sigma^2 \tr((\bC^T \bC) \* (\bB^T \bB))   \\
E\{\|\llbracket \bA, \delta \bB, \bC \rrbracket\|_F^2\} &=& R\sigma^2 \tr((\bC^T \bC) \* (\bA^T \bA)) \\
E\{\|\llbracket \bA, \bB,  \delta \bC \rrbracket\|_F^2\} &=& 
R\sigma^2 \tr((\bB^T \bB) \* (\bA^T \bA))
\ee
where $\odot$ and $\otimes$ are Khatri-Rao and Kronecker products, respectively. 

Finally, we replace these above expressions into (\ref{eq_ss2}) to obtain the compact expression of sensitivity.
\end{proof}

\subsection{Stabilization Method}

\subsubsection{Sensitivity minimization}

The first method to correct CPD with diverging components proposed in  \cite{Phan2019} minimizes the sum of Frobenius norms of rank-1 tensors while the approximation error is bounded. 
In \cite{Petr_sensitivity}. the Krylov Levenberg-Marquardt algorithm was proposed for the CPD with bounded sensitivity constraint.  

In this paper, we propose a variant of the EPC method which minimizes the sensitivity of the decomposition while preserving the approximation error, i.e., 
\be
    \min_{\{\bA, \bB, \bC\}} &\quad {\tt{ss}}(\llbracket {\mathbf{A}}, {\mathbf{B}}, {\mathbf{C}} \rrbracket) \label{eq_min_ss}\\
    \text{s.t.} &\quad \| \tK - \llbracket {\mathbf{A}}, {\mathbf{B}}, {\mathbf{C}} \rrbracket\|_F^2 \le \delta^2 \notag\,.
\ee
The bound, $\delta^2$, can represent the approximation error of the decomposition with diverging components.  
%
%
Continuing the CPD using a new tensor $\hat{\tK}=\llbracket {\mathbf{A}}, {\mathbf{B}}, {\mathbf{C}} \rrbracket$ with a lower sensitivity can improve its convergence. 

\subsubsection{Update rules}

We derive alternating update formulas for the above optimization problem.
While $\bB$ and $\bC$ are kept fixed, the objective function is rewritten to update $\bA$ as 
\be
\min_{\bA}  && \quad \tr\{(\bA^T \bA) \* \bW\}  =  \|\bA \diag(\bw) \|_F^2 \label{eq_prob_A_ss}\\
\text{s.t.} && \quad \|\bK_{(1)}  - \bA \bZ^T \|_F^2 \le \delta^2, \notag 
\ee
where $\bK_{(1)}$ is mode-1 unfolding of the kernel tensor $\tK$, $\bZ = \bC \odot \bB$
and $\bW = \bB^T\bB + \bC^T \bC$ is a symmetric matrix of size $R\times R$, $\bw = [\sqrt{w_{1,1}}, \ldots, \sqrt{w_{R,R}}]$ is a vector of length $R$ taken from the diagonal of $\bW$.

\begin{remark}
The problem (\ref{eq_prob_A_ss}) can be reformulated as a regression problem with bound constraint 
\be
\min_{\widetilde{\bA}} && \quad \|\widetilde{\bA} \|_F^2 \label{eq_prob_A_ss2}\\
\text{s.t.} && \quad \|\bK_{(1)}  -  \widetilde{\bA}  \widetilde{\bZ}^ T \|_F^2 \le \delta^2, \notag 
\ee
where $\widetilde{\bA} = \bA \diag(\bw)$ and $\widetilde{\bZ} = \bZ \diag(\bw^{-1})$.
This problem can be solved in closed form solution through the quadratic programming over a sphere \cite{Phan:2020:10.1007/s00521-019-04191-z}. We skip the algorithm details and refer to the solver in \cite{Phan:2020:10.1007/s00521-019-04191-z}.
\end{remark}

\begin{remark} 
If factor matrices $\bB$ and $\bC$ are normalized to unit length columns, i.e., $\|\bb_r\|_2 = \|\bc_r\|_2 = 1$, $r = 1, \ldots, R$, then all entries of the diagonal of $\bW$ are identical.
The optimization problem in (\ref{eq_prob_A_ss}) becomes seeking a weight matrix, $\bA$, with minimal norm
\be
\min_{\bA} && \quad  \|\bA\|_F^2 \label{eq_prob_A_epc}\\
\text{s.t.} && \quad \|\bK_{(1)}  - \bA \bZ^T \|_F^2 \le \delta^2 \notag.
\ee
This sub-optimization problem is similar to that in the EPC approach \cite{Phan2019}.
\end{remark}

\subsection{Tucker Decomposition with Bound Constraint}

Another well-known representation of multi-way data is the Tucker Decomposition \cite{tucker1963implications}, which decomposes a given tensor into a core tensor and a set of factor matrices (see Fig.~\ref{fig:tucker2} for illustration). The Tucker decomposition is particularly suited as prior-compression for CPD. In this case, we compute CPD of the core tensor in TKD,  which is of smaller dimensions than the original kernels.

For our  problem, we are interested in the Tucker-2 model (see Fig.~\ref{fig:tucker2})
\begin{equation}
    \tK \simeq \tG \times_{2} \bU \times_{3} \bV, 
\end{equation}
where $\tG$ is the core tensor of size $D^2 \times R_{1} \times R_2$, $\bU$ and $\bV$ are matrices of size $S\times R_1$ and $T\times R_2$, respectively.
Because of rotational ambiguity, without loss in generality, the matrices $\bU$ and $\bV$ can be assumed to have orthonormal columns.

Different from the ordinary TK-2, we seek the smallest TK-2 model which holds the approximation error bound $\delta^2$\cite{Phan_TT_part1u}, i.e., 
\be
\min_{\{\tG, \bU, \bV\}} & \quad R_1 S + R_2 T + R_1 R_2 D^2 \label{eq_tk2_bound}\\
\text{s.t.} & \quad \|\tK - \tG \times_{2} \bU \times_{3} \bV \|_F^2 \le \delta^2 \notag \\
& \quad \bU^T \bU = \bI_{R_1}, \bV^T \bV = \bI_{R_2}\,. \notag
\ee

We will show that the core tensor $\tG$ has closed-form expression as in the HOOI algorithm for the orthogonal Tucker decomposition \cite{HOOI:Lathauwer:2000}, and the two-factor matrices, $\bU$ and $\bV$, can be {\emph{sequentially estimated}} through Eigenvalue decomposition (EVD).
%
%
\begin{lemma} The core tensor ${\tG}$ has closed-form expression ${\tG}^{\star} = \tK \times_2 \bU^T \times_3 \bV^T$.
\end{lemma}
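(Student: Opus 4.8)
The plan is to exploit that the objective $R_1 S + R_2 T + R_1 R_2 D^2$ depends only on the multilinear ranks $R_1, R_2$ and not on the entries of $\tG$: the core enters the optimization purely through the error constraint. Therefore, for fixed ranks and fixed $\bU,\bV$ with orthonormal columns, all feasible cores yield the same objective value, and the canonical choice is the core that minimizes the residual $\|\tK - \tG \times_2 \bU \times_3 \bV\|_F^2$ --- this renders the constraint as easy to satisfy as possible and thus admits the smallest ranks in the outer problem. I would therefore reduce the claim to solving the inner least-squares problem $\min_{\tG}\,\|\tK - \tG \times_2 \bU \times_3 \bV\|_F^2$ and showing its minimizer is $\tG^{\star} = \tK \times_2 \bU^T \times_3 \bV^T$.

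First I would matricize along mode~1, the mode on which no factor acts, and use the standard Tucker unfolding identity to write
\be
\|\tK - \tG \times_2 \bU \times_3 \bV\|_F^2 &=& \|\bK_{(1)} - \bG_{(1)} (\bV \otimes \bU)^T\|_F^2. \notag
\ee
Next I would invoke the orthonormality $\bU^T\bU = \bI_{R_1}$, $\bV^T\bV = \bI_{R_2}$ together with the Kronecker mixed-product rule, giving $(\bV \otimes \bU)^T (\bV \otimes \bU) = (\bV^T\bV)\otimes(\bU^T\bU) = \bI_{R_1 R_2}$; hence $\bQ := \bV \otimes \bU$ has orthonormal columns. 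Completing the square with the projector $\bP = \bQ\bQ^T$ splits the residual into a term $\|\bK_{(1)}(\bI - \bP)\|_F^2$ independent of $\bG_{(1)}$ and a term $\|\bK_{(1)}\bQ - \bG_{(1)}\|_F^2$ that vanishes precisely when $\bG_{(1)} = \bK_{(1)}\bQ = \bK_{(1)}(\bV \otimes \bU)$. Folding this matricized solution back into tensor form yields exactly $\tG^{\star} = \tK \times_2 \bU^T \times_3 \bV^T$, as claimed; this is the same orthogonal projection that appears in the HOOI construction \cite{HOOI:Lathauwer:2000}.

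The calculations above are routine, so the only point requiring care is conceptual rather than computational: since $\tG$ is absent from the objective, one must justify why the projected core is the right closed form to report. The argument is that the optimal core is not pinned down by the objective alone but by its coupling to the ranks through feasibility --- among all cores the projection attains the minimal possible residual for given $\bU,\bV$, so choosing it never shrinks the feasible set of admissible $(R_1,R_2,\bU,\bV)$ and hence loses no optimality in the outer minimization. I would state this reduction explicitly, then present the least-squares derivation; the subsequent lemma handling $\bU,\bV$ via EVD can then proceed with $\tG$ already eliminated in closed form.
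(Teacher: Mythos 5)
Your proposal is correct and follows essentially the same route as the paper: your matricized splitting $\|\bK_{(1)}(\bI-\bQ\bQ^T)\|_F^2 + \|\bK_{(1)}\bQ - \bG_{(1)}\|_F^2$ is exactly the paper's Pythagorean identity $\|\tK\|_F^2 - \|\tG^{\star}\|_F^2 + \|\tG - \tG^{\star}\|_F^2$, and both arguments conclude that the projected core makes the error constraint as slack as possible and hence admits the minimal ranks. The only presentational difference is that the paper traces this final step explicitly through the eigenvalue-sum bound in the rank-minimization subproblem for $\bU$ (showing $R_1$ is minimized when $\gamma = \|\tG - \tG^{\star}\|_F = 0$), whereas you argue it by feasibility monotonicity; either is fine.
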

\begin{proof} From the error bound condition, we can derive 
\be
\delta^2 &\ge& \|\tK -  \tG \times_{2} \bU \times_{3} \bV\|_F^2
=  \|\tK\|_F^2 - \|{\tG}^{\star}\|_F^2 +  \|\tG - \tG^{\star}\|_F^2,  \notag
\ee
which indicates that the core tensor can be expressed as  
$\tG = \tG^{\star} + \tE$,
 where $\tE$ is an error tensor such that its norm $\gamma^2 = \|\tE\|_F^2 \le \delta^2 + \|\tG^{\star}\|_F^2 - \|\tK\|_F^2$.

Next define a matrix $\bQ_1$   of size $S \times S$  
\be
\bQ_1(i,j) &=&  \sum_{r = 1}^{R_2}  \bV{(:,r)}^T  \, \bK(:,i,:) \bK(:,j,:)^T  \,  \bV{(:,r)} \label{eq_Q1} \,.
\ee
Assume that $\bV^{\star}$ is the optimal factor matrix with the minimal rank $R_2^{\star}$.  The optimization in  (\ref{eq_tk2_bound}) becomes the rank minimization problem for $\bU$
\be
\min_{\bU} &&\quad {\text{rank}}(\bU)  \quad \label{eq_probU} \\
\text{s.t.}&&\quad \tr(\bU^T \bQ_1 \bU) \ge \|\tK\|_F^2 + \gamma^2  - \delta^2\,, \notag \\
&& \quad \bU^T \bU = \bI_{R_1}\notag.
\ee
The optimal factor matrix $\bU^{\star}$ comprises $R_1$ principal eigenvectors of $\bQ_1$, where $R_1$ is the smallest number of eigenvalues, $\lambda_1 \ge \lambda_2 \ge \cdots \ge \lambda_{R_1}$ such that their norm exceeds the bound $\|\tY\|_F^2 - \delta^2 + \gamma^2$, 
that is, $\displaystyle
\sum_{r = 1}^{R_1} \lambda_r \ge \|\tK\|_F^2 - \delta^2 + \gamma^2  > \sum_{r = 1}^{R_1-1} \lambda_r
$.
It is obvious that the minimal number of columns $R_1$ is achieved, when the bound $\|\tK\|_F^2 + \gamma^2  - \delta^2$ is smallest, i.e., $\gamma = 0$.
Implying that the optimal $\tG$ is $\tG^{\star}$. This completes the proof.
\end{proof}

Similar to the update of $\bU$, the matrix $\bV$ comprises $R_2$ principal eigenvectors of the matrix $\bQ_2$ of size $T \times T$
\be
\bQ_2(i,j) &=&  \sum_{r = 1}^{R_1}   \bU(:,r)^T  \, \bK(:,:,i)^T \bK(:,:,k)\, \bU(:,r)\label{eq_Q2},
\ee
where $R_2$ is either given or determined based on the bound $\|\tY\|_F^2 - \delta^2$.
The algorithm for TKD sequentially updates  $\bU$ and $\bV$.

\section{Implementation} \label{sec::method}


Our method for neural network compression includes the following main steps (see Fig.~\ref{fig:factorized_layers}):
\begin{enumerate}
\item 
Each convolutional kernel is approximated by a tensor decomposition (CPD/TKD-CPD in case of ordinary convolutions and SVD in case of $1 \times 1$ convolution) with given rank R.

\item 
The CP decomposition with diverging components is corrected using the error preserving method. The result is a new CP model with minimal sensitivity.

\item 
An initial convolutional kernel is replaced with a tensor in CPD/TKD-CPD or SVD format, which is equivalent to replacing one convolutional layer with a sequence of convolutional layers with a smaller total number of parameters. 


\item The entire network is then fine-tuned using backpropagation.
\end{enumerate}

{\bf CPD Block} results in three convolutional layers with shapes $(C_{in} \times R \times 1 \times 1)$, depthwise $(R \times R \times D \times D)$  and $(R \times C_{out}  \times 1 \times 1)$, respectively (see Fig.~\ref{fig:cpd-layer}).
%
%
In obtained structure, all spatial convolutions are performed by central $D \times D $ group convolution with $ R $ channels. $1 \times 1$ convolutions allow the transfer of input data to a more compact channel space (with $ R $ channels) and then return data to initial channel space.

{\bf TKD-CPD Block}
 is similar to the CPD block, but has 4 $(1 \times 1)$ convolutional layers with the condition that the CP rank must exceed the multilinear ranks, $R_1$ and $R_2$ (see Fig.~\ref{fig:tkd-cpd-layer}).
 This structure allows additionally to reduce the number of parameters and floating operations in a factorized layer. Otherwise, when $R < R_1$ and $R < R_2$, sequential $1 \times 1$ convolutions can be merged into one $1 \times 1$ convolution, converting the TKD-CPD layer format to CPD block.
 

{\bf SVD Block} is a variant of CPD Block but comprises only two-factor layers, computed using SVD. Degeneracy is not considered in this block, and no correction is applied (see Fig.~\ref{fig:svd-layer}). 



\begin{figure}[t]
    \centering
    \begin{subfigure}{.54\linewidth}
        \centering
        \includegraphics[width=\linewidth]{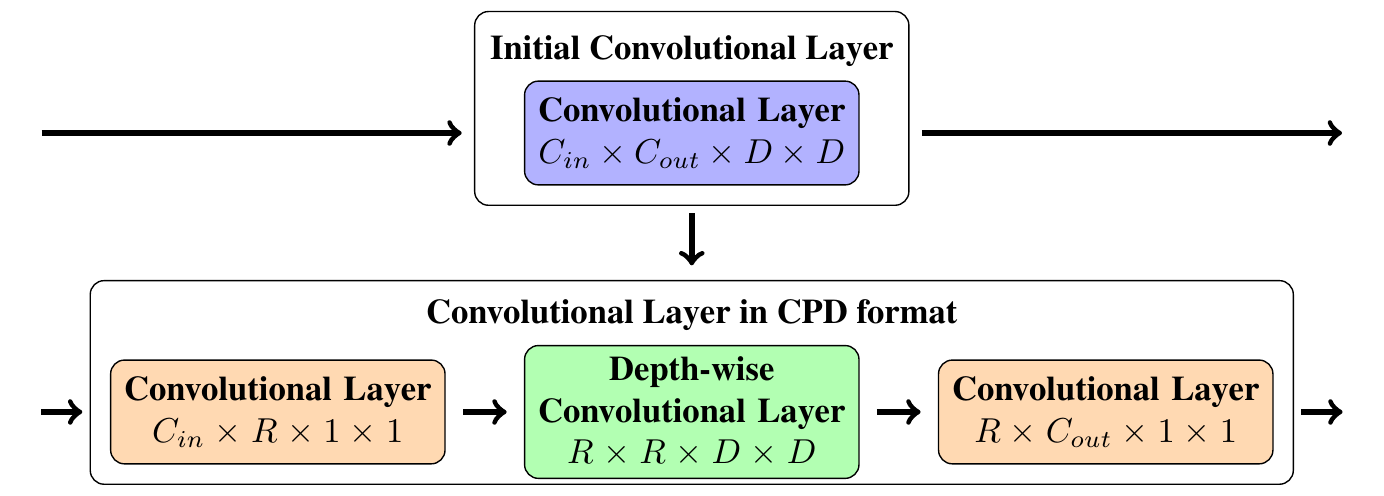}
        \caption{}\label{fig:cpd-layer}
    \end{subfigure}
    \begin{subfigure}{.4\linewidth}
        \centering
        \includegraphics[width=\linewidth]{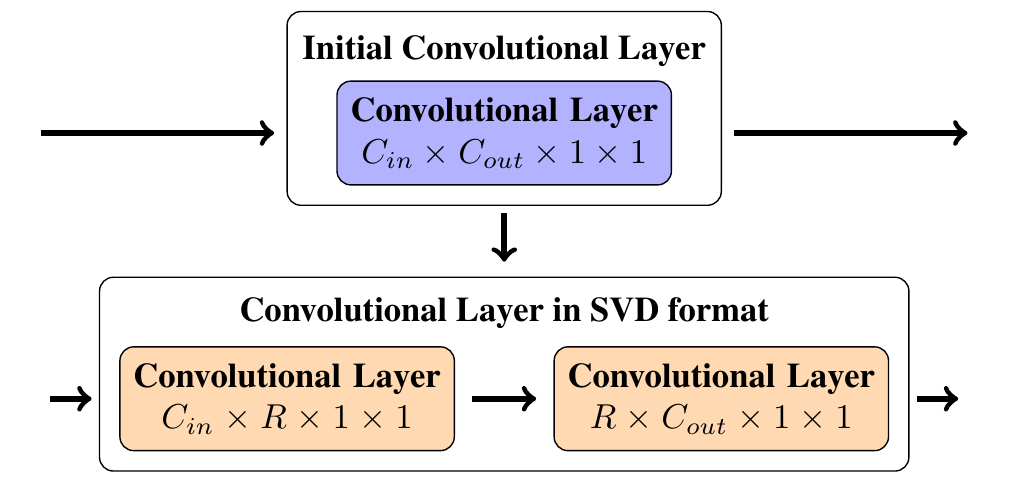}
        \caption{}\label{fig:svd-layer}
    \end{subfigure}
    \begin{subfigure}{.87\linewidth}
        \centering
        \includegraphics[width=\linewidth]{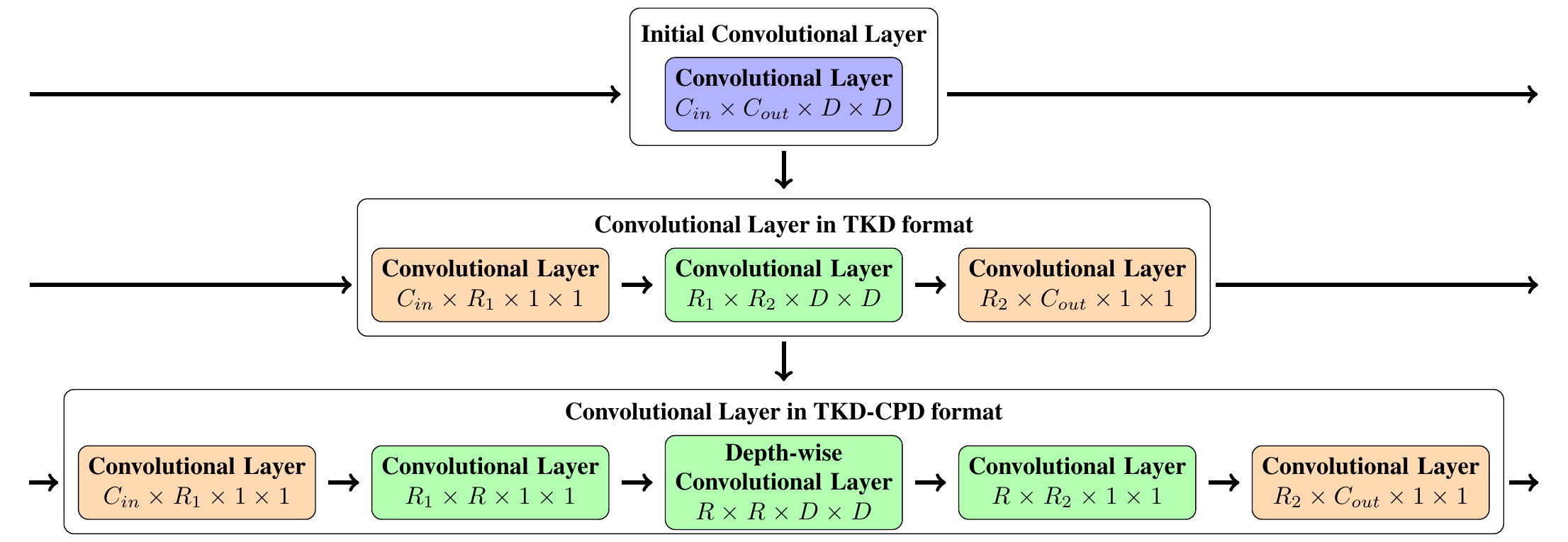}
        \caption{}\label{fig:tkd-cpd-layer}
    \end{subfigure}
   \caption{Graphical illustration to the proposed layer formats that show how decomposed factors are used as new weights of the compressed layer. $C_{in}, C_{out}$ are the number of input of and output channels and $D$ is a kernel size. (a) CPD layer format, $R$ is a CPD rank. (b)  SVD layer format, $R$ is a SVD rank. (c) TKD-CPD layer format, $R$ is a CPD rank, $R_1$ and $R_2$ are TKD ranks.}
    \label{fig:factorized_layers}
\end{figure}

{\bf Rank Search Procedure.} Determination of CP rank is an NP-hard problem \cite{hillar2013most}.  
We observe that the drop in accuracy by a factorized layer influences accuracy with fine-tuning of the whole network.
In our experiments, we apply a heuristic binary search to find the smallest rank such that drop after single layer fine-tuning does not exceed a predefined accuracy drop threshold $EPS$.

\section{Experiments}

We test our algorithms on three representative convolutional neural network architectures for image classification: \textit{VGG-16} \cite{SimonyanZ14a}, \textit{ResNet-18}, \textit{ResNet-50} \cite{HeZRS15}. We compressed $7 \times 7$ and $3 \times 3$ convolutional kernels with CPD, CPD with sensitivity correction (CPD-EPC), and Tucker-CPD with the correction (TKD-CPD-EPC).
The networks after fine-tuning are evaluated through  \textit{top 1} and \textit{top 5} accuracy on  \textit{ILSVRC-12} \cite{imagenet_cvpr09} and \textit{CIFAR-100} \cite{cifar09}. 

We conducted a series of layer-wise compression experiments and measured accuracy recovery and whole model compression of the decomposed architectures. Most of our experiments were devoted to the approximation of single layers when other layers remained intact. In addition, we performed compression of entire networks.

The experiments were conducted with the popular neural networks framework \textit{Pytorch} on GPU server with NVIDIA V-100 GPUs. As a baseline for ILSVRC-12 we used a pre-trained model shipped with \textit{Torchvision}. Baseline CIFAR-100 model was trained using the Cutout method. The fine-tuning process consists of two parts: local or single layer fine-tuning, and entire network fine-tuning. The model was trained with an SGD optimizer with an initial learning rate of $10^{-3}$ and learning decay of $0.1$ at each loss saturation stage, weight decay was set as $10^{-4}$.

\subsection{Layer-Wise Study}

\begin{figure}[t!]
\centering
\begin{subfigure}{.48\linewidth}
\includegraphics[width=\linewidth,height = .6\linewidth]{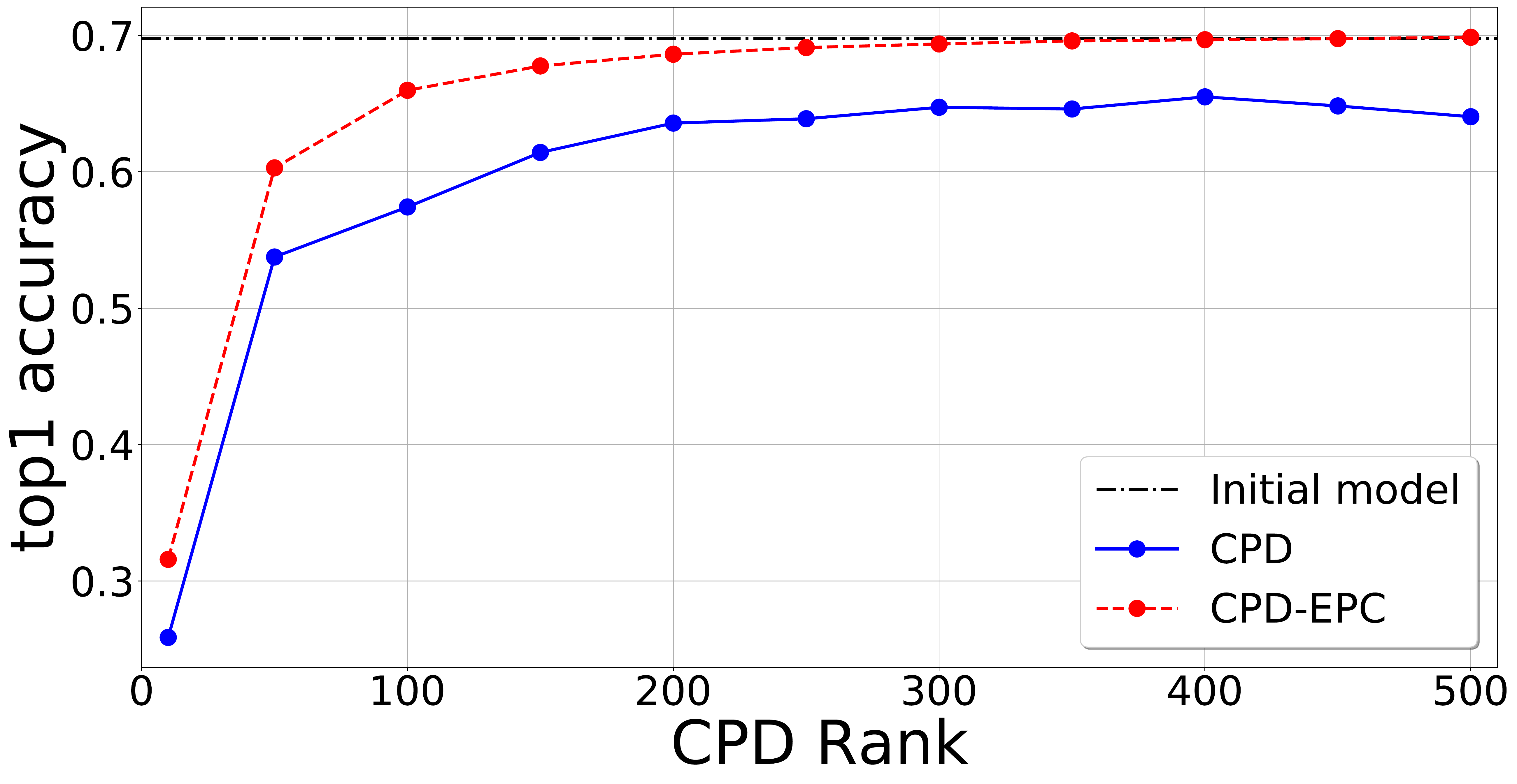}
\end{subfigure}
\hfill
\begin{subfigure}{.48\linewidth}
\includegraphics[width=\linewidth,height = .6\linewidth]{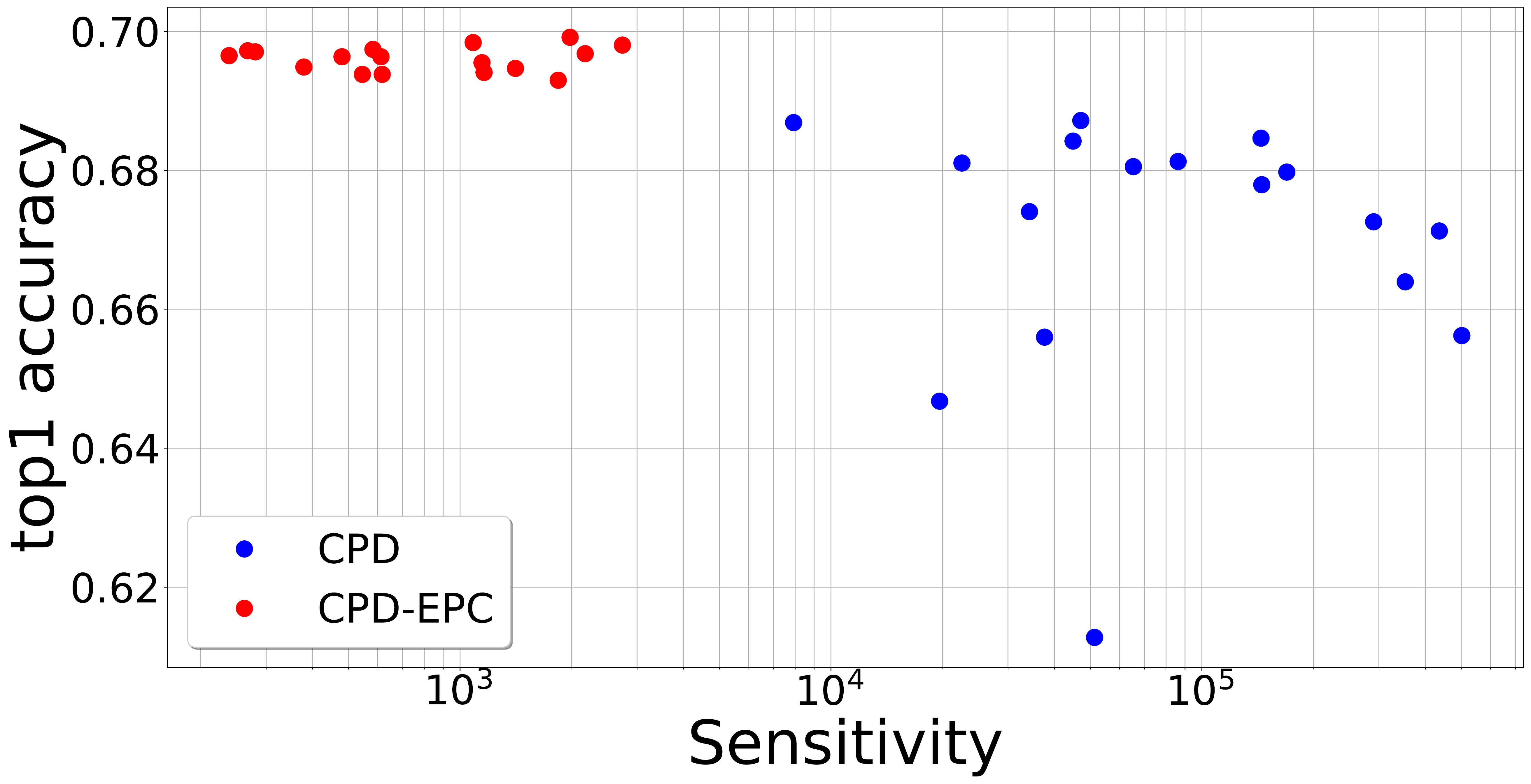}\label{fig:single_layers_all_ranks_2}
\end{subfigure}

\caption{(Left) Performance evaluation of ResNet-18  on ILSVRC-12 dataset after replacing {\tt{layer4.1.conv1}} by its approximation using CPD and CPD-EPC with various ranks. The networks are fine-tuned after compression. (Right) Top-1 accuracy and sensitivity of the models estimated using CPD (blue) and CPD-EPC (red). Each model has a single decomposed layer with the best CP rank and was fine-tuned after compression. CPD-EPC outperforms CPD in terms of accuracy/sensitivity trade-off.
}
\label{fig:single_layers_all_ranks}
\end{figure}
\footnotetext{{\tt{layer4.1.conv1}} -- layer 4, residual block 2(indexing starts with 0),  convolutional layer 1}

\subsubsection{CPD-EPC vs CPD}

For this study, we decomposed the kernel filters in 17 convolutional layers of ResNet-18 with different CP ranks, $R$, ranging from small (10) to relatively high rank (500). 
The CPDs were run with a sufficiently large number of iterations so that all models converged or there was no significant improvement in approximation errors. 

Experiment results show that for all decomposition ranks, the CPD-EPC regularly results in considerably higher \textit{top 1} and \textit{top 5} model accuracy than the standard CPD algorithm. 
Fig.~\ref{fig:single_layers_all_ranks}~(left) demonstrates an illustrative example for {\tt{layer4.1.conv1}}. An important observation is that the compressed network using CPD even with the rank of 500 (and fine-tuning) does not achieve the original network's accuracy. However, with EPC, the performances are much better and attain the original accuracy with the rank of 450. Even a much smaller model with the rank of 250 yields a relatively good result, with less than 1$\%$ loss of accuracy.  

Next, each convolutional layer in ResNet-18 was approximated with different CP ranks and fine-tuned. The best model in terms of top-1 accuracy was then selected. 
Fig.~\ref{fig:single_layers_all_ranks} (right) shows relation between the sensitivity and accuracy of the best models. 
It is straightforward to see that the models estimated using CPD exhibit high sensitivity, and are hard to train. The CPD-EPC  suppressed sensitivities of the estimated models and improved the performance of the compressed networks.
The CPD-EPC gained the most remarkable accuracy recovery on deeper layers of CNNs.

The effect is significant for some deep convolutional layers of the network with $\sim 2\%$ top-1 accuracy difference.

\subsubsection{CPD-EPC vs TKD-EPC}

Next, we investigated the proposed compression approach based on the hybrid TKD-CPD model with sensitivity control. Similar experiments were conducted for the CIFAR-100 dataset. The TK multi-linear ranks $(R_1, R_2)$ were kept fixed, while the CP rank varied in a wide range. 

In Fig.~\ref{fig:layer2.0.conv2_flops}, we compare accuracy of the two considered compressed approaches 
applied to the layer {\tt{4.0.conv1}} in ResNet-18. For this case, CPD-EPC still demonstrated a good performance.
The obtained accuracy is very consistent, implying that the layer exhibits a low-rank structure.
The hybrid TKD-CPD yielded a rather low accuracy for small models, i.e., with small ranks, which are much worse than the CPD-based model with less or approximately the same number of parameters. However, the method quickly attained the original top-1 accuracy and even exceeded the top-5 accuracy when the $R_{CP} \ge 110$.

Comparison of accuracy vs. the number of FLOPs and parameters for the other layers is provided in Fig.~\ref{fig:cifar100vs}. Each dot in the figure represents (accuracy, no. FLOPs) for each model. The dots for the same layers are connected by dashed lines. 
Once again, TKD-EPC achieved higher \textit{top 1} and \textit{top 5} accuracy with a smaller number of parameters and FLOPs, compared to CPD-EPC.

\begin{figure}[t!]
\centering
\includegraphics[width=1\linewidth,height = .20\linewidth, clip = true]{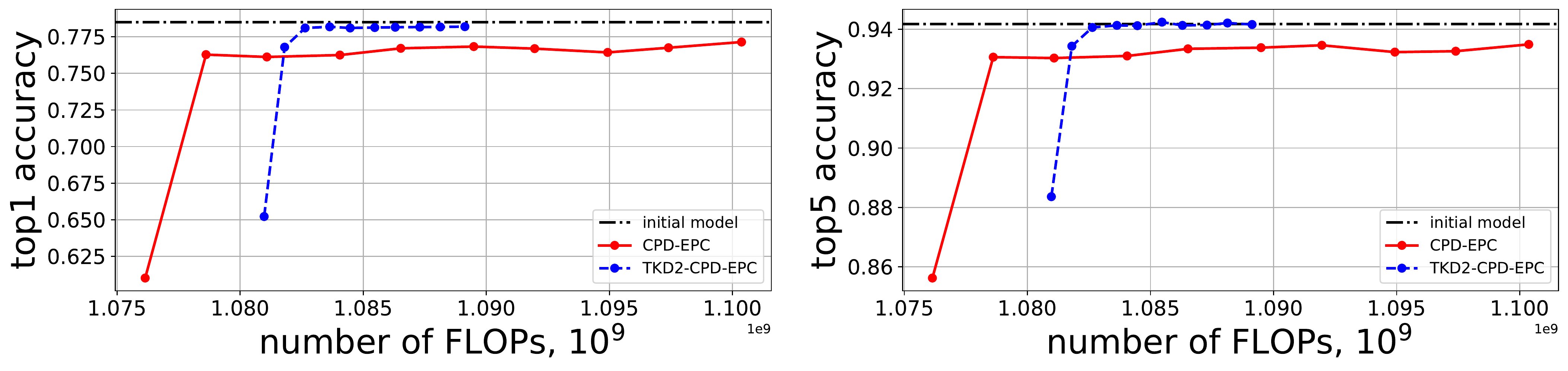} 
\caption{Performance comparison (top1 accuracy -- left, top5 accuracy -- right) of CPD-EPC and TKD-CPD-EPC in compression of the layer {\tt{4.0.conv1}} in the pre-trained ResNet-18 on ILSVRC-12 dataset. TKD-CPD-EPC shows better accuracy recovery with a relatively low number of FLOPs.  Initial model has $\approx 1.11 \times 10^9$ FLOPs.}
\label{fig:layer2.0.conv2_flops}
\end{figure}

\begin{figure}[t!]
\centering
\includegraphics[scale=0.10,width=1\linewidth]{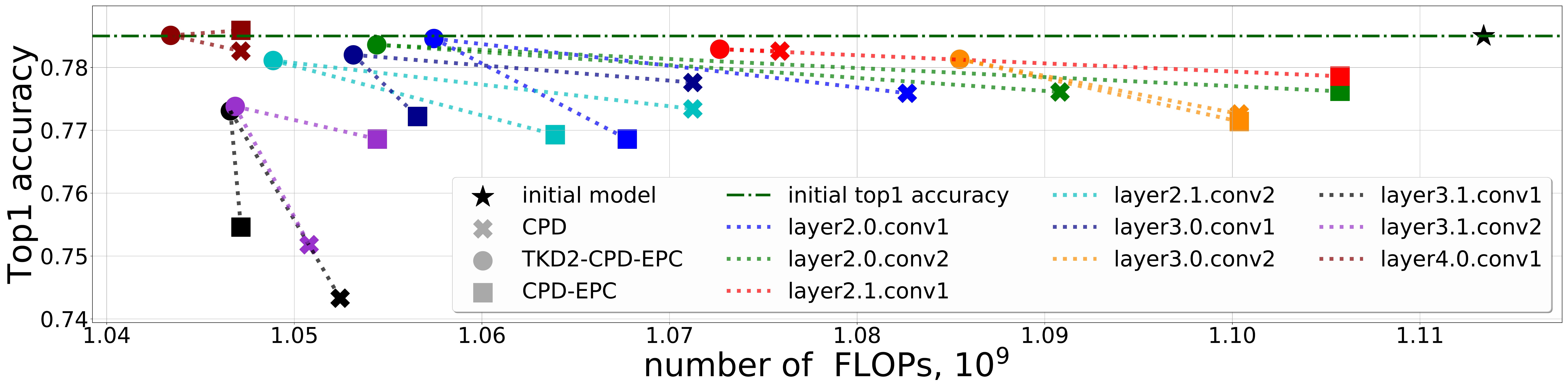}
\caption{Accuracy vs FLOPs for models obtained from ResNet-18 (CIFAR-100) via compression of one layer using standard CPD (cross), CPD-EPC (square), or TKD-CPD-EPC (circle) decomposition. Each color corresponds to one layer, which has been compressed using three different methods.
For each layer, {TKD-CPD-EPC} outperforms other decompositions in terms of FLOPs, or accuracy, or both.}
\label{fig:cifar100vs}
\end{figure}

\subsection{Full Model Compression}
In this section, we demonstrate the efficiency of our proposed method in a full model compression of three well-known CNNs \textit{VGG-16} \cite{SimonyanZ14a}, \textit{ResNet-18}, \textit{ResNet-50} \cite{HeZRS15} for the ILSVRC-12. We compressed all convolutional layers remaining fully-connected layers intact. The proposed scheme gives $(\times 1.10, \times 5.26)$ for VGG-16, $(\times 3.82, \times 3.09)$ for ResNet-18 and $(\times 2.51, \times 2.64)$ for ResNet-50  reduction in the number of weights and FLOPs respectively. Table \ref{table:overall_results} shows that our approach yields a high compression ratio while having a moderate accuracy drop.

{\bf VGG \cite{SimonyanZ14a}.} We compared our method with other low-rank compression approaches on \textit{VGG-16}. The Asym method \cite{zhang15accelerating} is one of the first successful methods on the whole VGG-16 network compression. This method exploits matrix decomposition, which is based on SVD and is able to reduce the number of flops by a factor of 5.  
Kim et al. \cite{Kim2016} applied TKD with ranks selected by VBMF, and achieved a comparable compression ratio but with a smaller accuracy drop.
As can be seen from the table \ref{table:overall_results}, our approach outperformed both Asym and TKD in terms of compression ratio and accuracy drop.  

{\bf ResNet-18  \cite{HeZRS15}.} This architecture is one of the lightest in the ResNet family, which gives relatively high accuracy. Most convolutional layers in ResNet-18 are with kernel size $ 3 \times 3$, making it a perfect candidate for the low-rank based methods for compression. We have compared our results with channel pruning methods \cite{hua2018channelGating, zhuang2018discrimination, gao2018dynamic} and iterative low-rank approximation method \cite{gusak2019automated}. Among all the considered results, our approach has shown the best performance in terms of compression - accuracy drop trade-off.

{\bf ResNet-50 \cite{HeZRS15}.} Compared to \textit{ResNet-18}, \textit{ResNet-50} is a deeper and heavier neural network, which is used as backbone in various modern applications, such as object detection and segmentation. A large number of $1 \times 1$ convolutions deteriorate performance of low-rank decomposition-based methods. There is not much relevant literature available for compression of  this type of ResNet. 
To the best of our knowledge, the results we obtained can be considered the first attempt to compress the entire  \textit{ResNet-50}.

{\bf Inference time for Resnet-50.}
We briefly compare the inference time of Resnet-50 for the image classification task in Table~\ref{tab:resnet_inference}. The measures were taken on 3 platforms: CPU server with Intel\textregistered~Xeon\textregistered~Silver 4114 CPU 2.20 GHz, NVIDIA GPU server with \textregistered~Tesla\textregistered~V100 and Qualcomm mobile CPU \textregistered\hspace{1ex} Snapdragon\texttrademark\hspace{1ex}845. The batch size was choosen to yield small variance in inference measurements, e.g., 16 for the measures on CPU server, 128 for the GPU server and 1 for the mobile CPU.  

\begin{table}[!t]
{
\centering
\caption{Comparison of different model compression methods on ILSVRC-12 validation dataset. The baseline models are taken from {Torchvision}.}\label{table:overall_results}
\centering
{
\begin{tabular*}{1\linewidth}{@{\extracolsep{\fill}}llccc}
\multicolumn{1}{l}{\textbf{Model}} & \textbf{Method}                     & \textbf{$\downarrow$ FLOPs} & \textbf{$\Delta$ top-1} & \textbf{$\Delta$ top-5} \\ \hline
\multirow{3}{*}{VGG-16 }        &  Asym.  \cite{zhang15accelerating}     & $\approx5.00$                                    &   -    &  -1.00     \\ 
                                &  TKD+VBMF \cite{Kim2016}     &  4.93  &   -    &  -0.50     \\
                                 &  \textbf{Our} (EPS\footnotemark[1]=0.005)  &  \textbf{5.26} & \textbf{-0.92}      &  \textbf{-0.34}     \\
                                 \hline
\multirow{5}{*}{ResNet-18}       
                                 & {Channel Gating NN \cite{hua2018channelGating}}          & 1.61           & -1.62                                & -1.03                               \\
                                 & {Discrimination-aware Channel Pruning \cite{zhuang2018discrimination}}    & 1.89           & -2.29                                & -1.38\\
                                 & {FBS \cite{gao2018dynamic}}                        & 1.98           & -2.54                                & -1.46\\
                                 & {MUSCO \cite{gusak2019automated}}   & 2.42 & -0.47 & -0.30\\
                                 & \textbf{Our} (EPS\footnotemark[1]=0.00325)    & \textbf{3.09}  & \textbf{-0.69}                       & \textbf{-0.15}      \\ \hline
{ResNet-50}       & \textbf{Our} (EPS\footnotemark[1]=0.0028) & \textbf{2.64} & \textbf{-1.47}   &  \textbf{-0.71}    \\ \hline
\end{tabular*}}}
\raggedright
\\[1ex]
\minitab[p{.9\textwidth}]{\footnotemark[1] EPS: accuracy drop threshold. Rank of the decomposition is chosen to maintain the drop in accuracy lower than EPS.}
\end{table}

\begin{table}[!ht]
    \centering
    \caption{Inference time and acceleration for ResNet-50 on different platforms.}
    \label{tab:resnet_inference}
    \begin{tabular}{l c c}
        \multirow{2}{*}{Platform} & \multicolumn{2}{c}{Model inference time }\\
         &  Original & Compressed  \\\hline
        {Intel\textregistered~Xeon\textregistered  Silver 4114 CPU 2.20 GHz} & 3.92 $\pm$ 0.02 s & 2.84 $\pm$ 0.02 s \\
        {NVIDIA\textregistered Tesla\textregistered V100} & 102.3 $\pm$ 0.5 ms & {89.5 $\pm$ 0.2 ms } \\
        {Qualcomm\textregistered Snapdragon\texttrademark 845} & 221 $\pm$ 4 ms & {171 $\pm$ 4 ms } \\\hline
    \end{tabular}
\end{table}

\section{Discussion and Conclusions}\label{sec:discussion}


Replacing a large dense kernel in a convolutional or fully-connected layer by its low-rank approximation is equivalent to substituting the initial layer with multiple ones, which in total have fewer parameters.
However, as far as we concerned, the sensitivity of the tensor-based models has never been considered before.
The closest method proposes to add regularizer on the Frobenius norm of each weight to prevent over-fitting.

In this paper, we have shown a more direct way to control the tensor-based network's sensitivity.
Through all the experiments for both ILSVRC-12 and CIFAR-100 dataset, we have demonstrated the validity and reliability of our proposed method for compression of CNNs, which includes a stable decomposition method with minimal sensitivity for both CPD and the hybrid TKD-CPD. 

As we can see from recent deep learning literature \cite{howard2019searching, tan2019efficientnet, kossaifi2019efficient}, modern state-of-the-art architectures exploit the CP format when constructing blocks of consecutive layers, which consist of $1\times 1$ convolution followed by depth-wise separable convolution.  The intuition that stays behind the effectiveness of such representation is that first $1\times 1$ convolution maps data to a higher-dimensional subspace, where the features are more separable, so we can apply separate convolutional kernels to preprocess them. 
Thus, representing weights in CP format using stable and efficient algorithms is the simplest and efficient way of constructing reduced convolutional kernels.

To the best of our knowledge, our paper is the first work solving a problem of building weights in the CP format that is stable and consistent with the fine-tuning procedure. 

The ability to control sensitivity and stability of factorized weights might be crucial when approaching incremental 
learning task \cite{bulat2019incremental} or multi-modal tasks, where information fusion across different modalities is performed through shared weight factors.

Our proposed CPD-EPC  method can allow more stable fine-tuning of architectures containing higher-order CP convolutional layers \cite{kossaifi2019efficient,kossaifi2019t} that are potentially very promising due to the ability to propagate the input structure through the whole network. We leave the mentioned directions for further research.

\section*{Acknowledgements}

The work of A.-H. Phan, A. Cichocki, I. Oseledets, J. Gusak, 
K. Sobolev, K. Sozykin and D. Ermilov was supported by the Ministry of Education and Science of the Russian Federation under Grant 14.756.31.0001.
The results of this work were achieved during the cooperation project with Noah's Ark Lab, Huawei Technologies.
The authors sincerely thank the Referees for very constructive comments which helped to improve the quality and presentation of the paper.
The computing for this project was performed on the Zhores CDISE HPC cluster at Skoltech\cite{zhores19}.

%
%
\bibliographystyle{splncs04}
\bibliography{egbib}
\end{document}